\newtheorem{theorem}{Theorem}
\newtheorem{remark}{Remark}
\newtheorem{example}{Example}
\newcommand{\fl}[1]{\textcolor{blue}{\bf\small [#1 --FL]}}
\newcommand{\todo}[1]{\textcolor{green}{\bf\small [#1 ]}}
\title{The Penalty Imposed by Ablated Data Augmentation}
\author{%
  Frederick Liu \\
  Google\\
  \texttt{frederickliu@google.com} \\
  \And
  Amir Najmi \\
  Google\\
  \texttt{amir@google.com} \\
  \And
  Mukund Sundararajan \\
  Google\\
  \texttt{mukunds@google.com}
}
\begin{document}

\maketitle

\begin{abstract}
  There is a set of data augmentation techniques that ablate parts of the input at random. These include input dropout~\cite{srivastava2014dropout}, cutout~\cite{devries2017improved}, and random erasing~\cite{Zhong2017RandomED}. We term these techniques \emph{ablated data augmentation}. Though these techniques seems similar in spirit and have shown success in improving model performance in a variety of domains, we do not yet have a mathematical understanding of the differences between these techniques like we do for other regularization techniques like L1 or L2.   
  First, we study a formal model of mean ablated data augmentation and inverted dropout for linear regression. We prove that ablated data augmentation is equivalent to optimizing the ordinary least squares objective along with a penalty that we call the Contribution Covariance Penalty and inverted dropout, a more common implementation than dropout in popular frameworks~\cite{tensorflow, pytorch}, is equivalent to optimizing the ordinary least squares objective along with Modified L2. 
  
  For deep networks, we demonstrate an empirical version of the result if we replace contributions with attributions~\cite{STY17} and coefficients with average gradients, i.e., the Contribution Covariance Penalty and Modified L2 Penalty drop with the increase of the corresponding ablated data augmentation across a variety of networks.  
\end{abstract}

\section{Motivation and Related Work}
\label{related work}

Data augmentation techniques increase the training data available to the model without collecting new labelled data. The idea is to take a labelled data point, perform a  modification that does not change the meaning of the data point and add it to the training data.

Data augmentation is popular for vision tasks. One approach to data augmentation is to apply standard image processing transformations like shifting, scaling, stretching, rotation, blurring, color and intensity modification, etc.~\cite{bengio2011deep, krizhevsky2012imagenet, lecun1998gradient, wu2015deep}. There have also been proposals to learn the transformation that is applied to the data ~\cite{lemley2017smart, cubuk2019autoaugment}. Let us call these \emph{transformation based data augmentation} techniques.

A second class data augmentation techniques work by ablating parts of the input. We term these techniques \emph{ablated data augmentation} techniques. Some data augmentation 
work by ablating parts of the input at random. These include inverted input dropout~\cite{srivastava2014dropout} that randomly sets part of the input to zero and scales input at test time, and cutout~\cite{devries2017improved}, and random erasing~\cite{Zhong2017RandomED} that erase parts of an image, and switchout~\cite{wang2018switchout} that randomly replaces words in both
the source sentence and the target sentence
with other random words from their corresponding vocabularies for neural machine translation. 

The transformation based techniques have an obvious justification: We expect the model to be \emph{invariant} to the transformations, and therefore we have expanded our dataset without collecting new labels. On the other hand, the ablated data augmentation techniques will (randomly) destroy signal. It is less clear that they can be justified by invariance. 

Informally, ablated data augmentation methods are considered a regularization techniques, i.e., it is seen as a method to reduce model complexity (see, for instance~\cite{devries2017improved}). \emph{However, do specific ablated data augmentation optimize a specific form of penalty? And if so, what kind of penalty? How does the penalty vary across different types of ablated data augmentation techniques?}  

We do know such penalties for other popular regularization techniques: Ridge Regression optimizes squared error with an L2 penalty on the coefficients, whereas Lasso regression optimizes squared error L1 penalty. That said, we would expect the story to be messier for ablated data augmentation. Unlike these techniques, which are defined by their optimization objectives, the regularizing action is a \emph{consequence} of a data manipulation procedure.
However, there is precedent for such a results:~\cite{Bishop95} shows that training a network with noise added to the input is equivalent to using a Tikhonov regularizer (i.e., Ridge Regression).  

Section 9.1 of the dropout paper~\cite{srivastava2014dropout} shows that dropout applied to linear regression is equivalent to a form of Ridge Regression. They argue that the weights of features with high variance are squeezed by increasing the dropout rate. However, this discussion does not take the effect of the variation in feature means and feature covariances. Also, it does not analyze inverted dropout, which is more commonly used and is the standard implementation in popular frameworks~\cite{tf_dropout, pytorch_dropout}.


\subsection{Our Results}

\begin{itemize}
    \item We prove (see Theorem~\ref{thm:main1}) that mean ablated data augmentation is equivalent to optimizing the ordinary least squares objective along with a penalty that we call the \emph{Contribution Covariance Penalty}. For a linear model, a feature's contribution is the product of its coefficient and its value. Thus, a feature pair is penalized if their contributions have a negative covariance. 
    
    \item In contrast, we prove (see Theorem~\ref{thm:main2}) that inverted input dropout, is equivalent to optimizing the ordinary least squares objective along with a penalty that we call the \emph{Modified L2 penalty}, which is the standard L2 penalty scaled by the the sum of the square of the feature means and the feature variance. It is perhaps surprising that a slight in the data ablation protocol results in a very different type of regularization. 
    
    \item For neural networks, we demonstrate an empirical analogs of Theorems~\ref{thm:main1} and~\ref{thm:main2}. To do this, we replace the notion of contribution for linear models with the notion of attribution(\cite{STY17,sampledshapley,Lundberg2017AUA}) for neural networks. Just as the prediction of a linear model can be decomposed into the contributions of its features, the prediction of a neural network can be decomposed into attributions of its features (see Section~\ref{sec:IG} in appendix for details). Thereafter, we replace contribution by attribution in the definitions of the Contribution Covariance Penalty and coefficients with average gradients of the Modified L2 Penalty. We show that Modified L2 Penalty drops with increased inverted input dropout data augmentation and the Contribution Covariance Penalty drops with increased mean ablated data augmentation. We notice this trend across a variety of tasks, and for a range of network depths (see Sections~\ref{sec:learnings}). We also show that the reverse is not true.
\end{itemize}

\section{Effect of Ablated Data Augmentation on Linear Models}

In this section we identity the mathematical form of the penalty imposed by two types of ablated data augmentation for linear models. 

\subsection{Ablated Data Augmentation}

To perform data augmentation, we generate a synthetic dataset by randomly selecting (with replacement) the $i\mbox{-th}$ observation $(x_i, y_i)$ from the original data $d$ and adding to the synthetic dataset a modified version of this observation. The modification is to \textbf{ablate} each feature independently with probability $\lambda$, i.e., we replace the feature value with the dataset mean for the feature. The response $y_i$ is not ablated. We call this \textbf{mean ablation data augmentation}. We also study \textbf{inverted input dropout}, which differs from mean ablation data augmentation in two ways. First, it replaces the feature value with zero and not the mean. Second, it scales unablated features by the reciprocal of the ablation probability. In either method, this process of resampling probabilistically ablated observations is repeated $N$ times, and we denote the resultant dataset as $D^\lambda_N$. We denote the least squares solution for this data by $\hat{\beta}(D^\lambda_N)$. 

\subsection{Contribution Covariance Penalty}
\label{sec:penalty}

We will show (Theorem~\ref{thm:main1}) that the least squares model for the data augmented with mean ablation is equivalent to the optimal linear model (on the original data) with a certain regularization penalty, which we call the \emph{Contribution Covariance penalty}. The \emph{contribution} of a feature $j$ for a specific input $i$ is the product of its coefficient times its value for the input, $\beta_j x_{ij}$; recall that in a linear model the prediction is the sum of the contributions (plus the bias term $\beta_0$). Formally, the penalty is: 
\begin{align}
\label{eq:penalty}
n \sum_{\substack{j \neq k \\ j,k\in J}}
- \mathrm{cov}(\beta_j x_j, \beta_k x_k)
\end{align}
Here, the covariances are across the $n$ training data points. Note the minus sign, indicating that penalty increases if contributions are anticorrelated. 

As is usual with regularization, our minimization objective will be the sum of the least squares objective and $\lambda$ times the penalty. Though $\lambda$ can take on values in the range $[0,\infty)$, we will be concerned with values of $\lambda$ in the range $[0,1]$, because this the range of $\lambda$ necessary to mimic data augmentation. 

We denote the model that minimizes this objective by $\hat{\beta}_{\mathrm{ccp}(\lambda)}(d)$. As we show in the proof of Theorem~\ref{thm:main1}, this minimum exists and is unique. 

\subsection{Modified L2 Penalty}
\label{sec:l2penalty}

We will show (Theorem~\ref{thm:main2}) that the least squares model for the data augmented with inverted input dropout is equivalent to the optimal linear model (on the original data) with a certain regularization penalty, which we call the \emph{Modified L2 Penalty}. Let $v_j$ and $\mu_j$ be the variance and mean of feature $j$ respectively. Then, the penalty is given by: 

\begin{align}
\label{eq:L2penalty}
\sum_{j \in J} (v_j + \mu_j^2) \beta_j^2
\end{align}

We denote the model that minimizes this objective by $\hat{\beta}_{\mathrm{ml2p}(\lambda)}(d)$. If the data is standardized, i.e., if all the features have mean of zero and a variance of one, the Modified L2 Penalty coincides with the standard L2 Penalty.

\subsection{Formal Results}

\begin{theorem}[Mean Ablated Data Augmentation as Regularization]
\label{thm:main1}
The Ordinary Least Squares (OLS) model with mean ablated data augmentation $\hat{\beta}(D^\lambda_N)$ converges to the least squares model with the Contribution Covariance penalty trained on the original data set $d$, i.e., $\hat{\beta}_{\mathrm{ccp}(\lambda)}(d)$, as the augmented data set size $N \rightarrow \infty$.
\end{theorem}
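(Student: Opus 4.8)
\medskip
\noindent\textbf{Proof plan.}
The plan is to reduce the statement to a deterministic optimization problem over a fixed \emph{augmentation distribution} and then check that this problem and the Contribution Covariance problem on $d$ have the same minimizer. Fix $d$ and $\lambda\in[0,1)$ (the case $\lambda=1$ is degenerate and excluded), and let $\mathcal{D}_\lambda$ be the law of the pair $(\tilde{x},y)$ obtained by drawing an index $i$ uniformly from $\{1,\dots,n\}$, setting $y=y_i$, and, independently for each $j\in J$, setting $\tilde{x}_j=x_{ij}$ with probability $1-\lambda$ and $\tilde{x}_j=\mu_j$ with probability $\lambda$. By construction the rows of $D^\lambda_N$ are i.i.d.\ draws from $\mathcal{D}_\lambda$, so $\hat{\beta}(D^\lambda_N)$ is just the ordinary least squares fit of an i.i.d.\ sample of size $N$ from $\mathcal{D}_\lambda$; writing $z=(1,\tilde{x}^\top)^\top$ and $\hat{A}_N=\tfrac1N\sum z z^\top$, $\hat{b}_N=\tfrac1N\sum z\,y$ (sums over the $N$ augmented rows), we have $\hat{\beta}(D^\lambda_N)=\hat{A}_N^{-1}\hat{b}_N$ whenever $\hat{A}_N$ is invertible.

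\medskip
First I would turn this into a limit. Every coordinate of $z$ and $y$ takes finitely many values, so all moments are finite and the strong law of large numbers gives $\hat{A}_N\to A:=\mathbb{E}_{\mathcal{D}_\lambda}[zz^\top]$ and $\hat{b}_N\to b:=\mathbb{E}_{\mathcal{D}_\lambda}[zy]$ almost surely. Under the mild non-degeneracy hypothesis that no feature of $d$ is constant and the intercept-augmented design of $d$ has full column rank (which is in any case what makes $\hat{\beta}_{\mathrm{ccp}(\lambda)}(d)$ unique), one checks $A\succ0$: on the positive-probability event that no feature is ablated, $z$ equals $(1,x_i^\top)^\top$ for each $i$ with positive probability, so $a^\top z=0$ a.s.\ would force $a$ orthogonal to every original intercept-augmented row, hence $a=0$. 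Thus $\hat{A}_N$ is invertible for all large $N$, and, matrix inversion being continuous on the invertible matrices, $\hat{\beta}(D^\lambda_N)\to A^{-1}b$ almost surely; since $L(\beta):=\mathbb{E}_{\mathcal{D}_\lambda}[(y-z^\top\beta)^2]=\mathbb{E}[y^2]-2\beta^\top b+\beta^\top A\beta$ is a strictly convex quadratic, $A^{-1}b=\arg\min_\beta L(\beta)$. I expect this to be the delicate step, because it is where ``OLS of the augmented data'' must be made into a genuine limit statement; in particular the invertibility of the random Gram matrix $\hat{A}_N$ for large $N$ has to be argued, and this is where both the non-degeneracy hypothesis and $\lambda<1$ enter.

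\medskip
Next I would compute the moments of $\mathcal{D}_\lambda$ (by conditioning on the drawn index and using independence of the ablations; $\mathrm{cov},\mathrm{var}$ below are empirical over $d$): $\mathbb{E}[\tilde{x}_j]=\mu_j$, $\mathrm{Var}(\tilde{x}_j)=(1-\lambda)v_j$, $\mathrm{Cov}(\tilde{x}_j,\tilde{x}_k)=(1-\lambda)^2\,\mathrm{cov}(x_j,x_k)$ for $j\neq k$, and $\mathrm{Cov}(\tilde{x}_j,y)=(1-\lambda)\,\mathrm{cov}(x_j,y)$. Profiling out the intercept is legitimate for both problems and yields the same formula $\beta_0=\bar{y}-\sum_j\beta_j\mu_j$ (the intercept column is never ablated), so it remains to compare, as functions of $b=(\beta_j)_{j\in J}$, the profiled augmented objective $\widetilde{L}$ and the profiled Contribution Covariance objective $\widetilde{G}_\lambda$, where $G_\lambda(\beta)=\tfrac1n\sum_i(y_i-\beta_0-\sum_j\beta_j x_{ij})^2+\lambda\sum_{j\neq k}-\mathrm{cov}(\beta_j x_j,\beta_k x_k)$ is the objective of Section~\ref{sec:penalty} rescaled by $1/n$ (which leaves its minimizer unchanged). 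The moments above give
\begin{align}
\widetilde{L}(b) &= \mathrm{var}(y) - 2(1-\lambda)\,b^\top c + (1-\lambda)\,b^\top M b, \label{eq:tildeL}\\
\widetilde{G}_\lambda(b) &= \mathrm{var}(y) - 2\,b^\top c + b^\top M b, \label{eq:tildeG}
\end{align}
with $c=(\mathrm{cov}(x_j,y))_j$ and $M=(1-\lambda)\Sigma+\lambda\,\mathrm{diag}(v)$, where $\Sigma$ is the empirical feature-covariance matrix and $v=(v_j)_j$. The crux is then that $M\succ0$ for $\lambda\in[0,1)$ under the non-degeneracy hypothesis (for $\lambda>0$ already because $\lambda\,\mathrm{diag}(v)\succ0$), so \eqref{eq:tildeL} and \eqref{eq:tildeG} are strictly convex and the identity $\widetilde{L}(b)=(1-\lambda)\,\widetilde{G}_\lambda(b)+\lambda\,\mathrm{var}(y)$ holds exactly; hence both are minimized at $b^\star=M^{-1}c$. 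Re-attaching $\beta_0=\bar{y}-\sum_j b^\star_j\mu_j$ identifies $A^{-1}b$ with $\hat{\beta}_{\mathrm{ccp}(\lambda)}(d)$ (and proves the existence and uniqueness asserted in Section~\ref{sec:penalty}), and together with the previous paragraph this gives $\hat{\beta}(D^\lambda_N)\to\hat{\beta}_{\mathrm{ccp}(\lambda)}(d)$ almost surely, in particular in probability. Apart from step one the remaining work is bookkeeping, but the moment computation and the collapse of \eqref{eq:tildeL}--\eqref{eq:tildeG} onto a single quadratic should be done with care because of the interplay of the $\lambda$, $(1-\lambda)$, and $(1-\lambda)^2$ factors and the passage between raw second moments and covariances.
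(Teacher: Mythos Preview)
Your proposal is correct and follows essentially the same route as the paper: profile out the intercept (equivalently, work with mean-subtracted data), compute the second moments of the augmented distribution via the strong law, and identify the resulting quadratic with the CCP objective, arriving at the common minimizer $\big((1-\lambda)\underline{X}^T\underline{X}+n\lambda V\big)^{-1}\underline{X}^T\underline{y}$. Your affine identity $\widetilde{L}=(1-\lambda)\,\widetilde{G}_\lambda+\lambda\,\mathrm{var}(y)$ is a slightly cleaner way to match the two problems than the paper's separate derivation of both closed forms, and you are more explicit than the paper about the non-degeneracy needed for invertibility of the Gram matrices and the restriction $\lambda<1$.
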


\begin{proof}
We show that the closed-form solutions for the two models are identical.

First, we study the model with mean ablated data augmentation. 

Let $\tilde{X}$ denote the features which include mean ablations and $\tilde{y}$ the associated response values for the augmented dataset. Thus $D_N^\lambda = (\tilde{X}, \tilde{y})$. As explained in section \ref{sec:OLS-mean-subtraction} in appendix, the OLS regression formula can be recast in terms of features and response with their means subtracted. Let underbar represent the subtraction of column-wise means, i.e., $\underline{z} = z - \bar{z}$. Then the OLS solution on $D_N^\lambda$ can be expressed in terms of 
$\underline{\tilde{X}}^T \underline{\tilde{X}}$ and 
$\underline{\tilde{X}}^T \underline{\tilde{y}}$. As shown in section \ref{eq:ablation-asymptotics} of the appendix:

\begin{align*}
\lim_{N\to \infty} \frac{1}{N} \underline{\tilde{X}}^T \underline{\tilde{y}}
    &\overset{a.s.}{\to} \frac{1}{n} (1-\lambda) \underline{X}^T \underline{y} \\
\lim_{N\to \infty} \frac{1}{N} \underline{\tilde{X}}^T \underline{\tilde{X}} 
    &\overset{a.s.}{\to} \frac{1}{n} (1-\lambda)^2 \underline{X}^T \underline{X} + \lambda(1-\lambda) V
\end{align*}
where $V$ is the $k\times k$ diagonal matrix of feature variances, $n$ is the number of data points in the original dataset.

Applying OLS regression to $D^\lambda_N$:
\begin{align*}
\hat{\beta}(D^\lambda_N) 
&= \Big(\underline{\tilde{X}}^T \underline{\tilde{X}}\Big)^{-1} \underline{\tilde{X}}^T \underline{\tilde{y}}\\
&\overset{a.s.}{\to} \Big(\frac{1}{n} (1-\lambda)^2 \underline{X}^T \underline{X} + \lambda(1-\lambda) V\Big)^{-1}
\Big(\frac{1}{n} (1-\lambda) \underline{X}^T \underline{y}\Big)
\\
&=\Big((1-\lambda) \underline{X}^T \underline{X} + n\lambda V\Big)^{-1}\underline{X}^T \underline{y}
\end{align*}

Next, we study regularization with the Contribution Covariance penalty. We can rewrite the penalty in matrix form:
\begin{align*}
n \sum_{\substack{j \neq k \\ j,k\in J}}
- \mathrm{cov}(\beta_j x_j, \beta_k x_k)
&=
n \Big(
\sum_{j \in J}\mathrm{var}(\beta_j x_j) -
 \mathrm{var}\Big(\sum_{j \in J}\beta_j x_j\Big)
 \Big)
\\
&=n \Big(
\sum_{j \in J}\mathrm{var}(\beta_j x_j)  - \mathrm{var}(\hat{y}) 
 \Big)
\\
&=
n \Big(\sum_{j \in J} \beta_j^2 \mathrm{var\ }x_j
- \frac{1}{n} |(\beta_0 + X\beta) - (\beta_0 +  \bar{X}\beta)|^2 \Big)
\\
&=
n \beta^T V \beta - |\underline{X}\beta|^2\\
&= \beta^T (n V - \underline{X}^T \underline{X}) \beta
\end{align*}
where $V$ is the $k\times k$ diagonal matrix of feature variances, $n$ is the number of data points in the original dataset. .

We may now use this expression to add a penalty to the least squares objective and minimize it for the Lagrangian $\lambda$:
\begin{align*}
J(\beta) &= |\underline{y} - \underline{X}\beta|^2 + 
\lambda \beta^T (n V - \underline{X}^T \underline{X}) \beta \\
\frac{\partial J}{\partial \beta}
&= -2\underline{X}^T\left(\underline{y} - \underline{X}\beta\right) + 2\lambda (n V - \underline{X}^T \underline{X})\beta \\
\frac{\partial^2 J}{\partial \beta^2} &= 
2\left((1-\lambda) \underline{X}^T \underline{X} + n\lambda V\right)
\end{align*}
The Hessian is positive definite Hessian for $\lambda<1$, hence a minimum exists.
\begin{align}
\label{eq:ablate-closed-form}
\hat{\beta}_{\mathrm{ccp}(\lambda)}(d) &= 
\left( (1-\lambda)\underline{X}^T \underline{X} + n \lambda V \right)^{-1} \underline{X}^T \underline{y}
\end{align}
This completes the proof.
\end{proof}

Next, we study inverted input dropout data augmentation.

\begin{theorem}[Inverted Input Dropout Data Augmentation as Regularization]
\label{thm:main2}
The Ordinary Least Squares (OLS) model with inverted input dropout data augmentation $\hat{\beta}(D^\lambda_N)$ converges to the least squares model with the Modified L2 penalty trained on the original data set $d$, i.e., $\hat{\beta}_{\mathrm{ml2p}(\lambda)}(d)$, as the augmented data set size $N \rightarrow \infty$.
\end{theorem}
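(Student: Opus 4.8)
The plan is to follow the proof of Theorem~\ref{thm:main1} essentially verbatim, changing only the second-moment bookkeeping dictated by the different ablation rule. Under inverted input dropout an augmented row is produced by drawing $(x_i,y_i)$ uniformly with replacement from $d$ and, independently for each feature $j$, replacing $x_{ij}$ by $z_{ij}\,x_{ij}$, where the mask $z_{ij}$ equals $\tfrac{1}{1-\lambda}$ with probability $1-\lambda$ and $0$ with probability $\lambda$; the response is not touched. The elementary facts I will use are $\mathbb{E}[z_{ij}]=1$, $\mathbb{E}[z_{ij}^2]=\tfrac{1}{1-\lambda}$, and $\mathbb{E}[z_{ij}z_{ik}]=1$ for $j\neq k$ by independence. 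In particular, because $\mathbb{E}[z_{ij}]=1$, the column means of $\tilde X$ still converge almost surely to the original feature means $\mu_j$, so the mean-subtracted quantities behave cleanly in the limit.

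Next I will invoke the strong law of large numbers as in Section~\ref{eq:ablation-asymptotics} of the appendix. Because the off-diagonal Gram entries and the $\tilde X$--$\tilde y$ products each involve the mask only through its mean, these limits carry no $\lambda$ factor:
\begin{align*}
\tfrac1N \underline{\tilde X}^T\underline{\tilde y} &\overset{a.s.}{\to} \tfrac1n \underline X^T\underline y, \qquad \big(\tfrac1N \underline{\tilde X}^T\underline{\tilde X}\big)_{jk} \overset{a.s.}{\to} \tfrac1n\big(\underline X^T\underline X\big)_{jk} \quad (j\neq k).
\end{align*}
On the diagonal the second moment $\tfrac{1}{1-\lambda}$ enters, producing $\tfrac1n(\underline X^T\underline X)_{jj} + \tfrac{\lambda}{1-\lambda}(v_j+\mu_j^2)$. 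Writing $\Sigma=\mathrm{diag}(v_j+\mu_j^2)$, this gives $\tfrac1N\underline{\tilde X}^T\underline{\tilde X}\overset{a.s.}{\to}\tfrac1n\underline X^T\underline X + \tfrac{\lambda}{1-\lambda}\Sigma$, and substituting into $\hat\beta(D^\lambda_N)=(\underline{\tilde X}^T\underline{\tilde X})^{-1}\underline{\tilde X}^T\underline{\tilde y}$ and simplifying yields
\begin{align*}
\hat\beta(D^\lambda_N)\overset{a.s.}{\to}\Big(\underline X^T\underline X + \tfrac{n\lambda}{1-\lambda}\Sigma\Big)^{-1}\underline X^T\underline y.
\end{align*}

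Finally I will show this is the minimizer of the Modified $L_2$-regularized objective. Writing the penalty \ref{eq:L2penalty} as $\beta^T\Sigma\beta$, the objective $J(\beta)=|\underline y-\underline X\beta|^2+\gamma\,\beta^T\Sigma\beta$ has gradient $-2\underline X^T(\underline y-\underline X\beta)+2\gamma\Sigma\beta$ and Hessian $2(\underline X^T\underline X+\gamma\Sigma)$, which is positive definite for any $\gamma>0$ (no feature is identically zero, so $\Sigma\succ0$); hence a unique minimizer exists, $\hat\beta_{\mathrm{ml2p}(\lambda)}(d)=(\underline X^T\underline X+\gamma\Sigma)^{-1}\underline X^T\underline y$. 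Identifying the penalty weight $\gamma=\tfrac{n\lambda}{1-\lambda}$ matches the two closed forms and completes the proof. I expect the only genuinely non-routine point to be the mask moment computation together with the observation that the rescaling by $\tfrac{1}{1-\lambda}$ is precisely what preserves $\mathbb{E}[z_{ij}]=1$: this is what turns the $(1-\lambda)$-type coefficient shrinkage of the mean-ablation case into a pure diagonal, ridge-like inflation proportional to $v_j+\mu_j^2$, and hence explains why a small change in the ablation protocol flips a covariance penalty into an $L_2$-type penalty. The almost-sure convergence argument itself is identical in structure to Theorem~\ref{thm:main1} and can be cited from the appendix.
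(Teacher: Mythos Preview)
Your proof is correct and follows essentially the same approach as the paper: compute the first and second moments of the masked features, invoke the strong law to obtain the limiting Gram matrix and cross-product, and read off the closed form. Your version is in fact slightly more complete than the paper's, since you explicitly derive the Modified $L_2$ minimizer and identify the penalty weight $\gamma=\tfrac{n\lambda}{1-\lambda}$, whereas the paper's proof stops at the limiting expression for $\hat\beta(D^\lambda_N)$ and leaves that matching implicit.
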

\begin{proof}
Let $EX_j=\mu_j$. By conditioning on ablation, we observe that $E\tilde{X}_j=\mu_j$. Likewise, we find the following expectations for inverted input dropout (see section~\ref{sec:IID-expectations} of thee appendix) 
\begin{align*}
E\underline{\tilde{X}}^2
&= v_j + \frac{\lambda}{1-\lambda}(v_j + \mu_j^2)
\tag{where $v_j$ is variance of $j$-th feature}
\\
E\underline{\tilde{X}}_j \underline{\tilde{X}}_k
&=E\underline{X}_j \underline{X}_k
\\
E\underline{\tilde{X}}_j \underline{Y}
&=E\underline{X}_j \underline{Y}
\end{align*}
As in Theorem \ref{thm:main1}, averages over $D^\lambda_N$ converge almost surely to their averages over $d$ when $N \to \infty$ (Strong Law). Thus
\begin{align*}
\lim_{N\to \infty} \frac{1}{N} \underline{\tilde{X}}^T \underline{\tilde{y}}
    &\overset{a.s.}{\to} \frac{1}{n} \underline{X}^T \underline{y} \\
\lim_{N\to \infty} \frac{1}{N} \underline{\tilde{X}}^T \underline{\tilde{X}} 
    &\overset{a.s.}{\to} \frac{1}{n} \underline{X}^T \underline{X} + \frac{\lambda}{1-\lambda}D
\end{align*}
where $D$ is the diagonal matrix of $v_j + \mu_j^2$, in other words, the diagonal matrix of $EX_j^2$ for feature values.
Hence
\begin{align*}
\hat{\beta}(D^\lambda_N) &\overset{a.s.}{\to} \Big(X^TX + \frac{\lambda}{1-\lambda}D \Big)^{-1} X^T y
\end{align*}
\end{proof}

\begin{remark} [Computing and interpreting the Covariance Contribution Penalty]
\label{re:ccp}
The following remarks are meant to provide intuition to the reader in how the penalty behaves:
\begin{itemize}
\item As defined, the penalty involves $O(k^2)$ terms. This was done for clarity of interpretation. For computation, the following form is preferable:
\begin{align}
\label{eq:var-form}
n \Big(\sum_{j \in J} \mathrm{var}(\beta_j x_j)\Big)
- n \mathrm{var\ }(\hat{y})
\end{align}

\item If the features are uncorrelated, there is no penalty, and the solution is the OLS solution.

\item If the sum of contribution covariances is positive, the penalty itself becomes negative, and hence a reward. This happens when the contributions of individual features are mutually reinforcing rather than cancelling. The fact that contribution covariance can be negative makes it less intuitive as a penalty per se. Nonetheless there is limit to how much a reward this penalty will yield without increasing model error.

\item In a model with two almost perfectly-correlated features, the penalty is minimized, in this case most negative, when the model assigns equal contribution to the two features. This is desirable for model robustness.

\item The expression for $\hat{\beta}_{\mathrm{ccp}(\lambda)}(d)$
involves a convex combination of 
$\underline{X}^T \underline{X}$ and its diagonal  $n \lambda V$. Thus increasing $\lambda$ scales down the off-diagonal terms of
$\underline{X}^T \underline{X}$. When $\lambda = 1$, the off-diagonal terms go to zero and we end up with a solution corresponding to $k$ single-variable regressions
\begin{align*}
\lim_{\lambda \to 1} \hat{\beta}_{\mathrm{ccp}(\lambda)}(d)
&=V^{-1}\underline{X}^T\underline{y}
\end{align*}
\end{itemize}
\end{remark}


\begin{remark}[Comparing Regularizations]
\label{re:compare}
The contribution covariance penalty has similarities with the $L_2$ penalty in that they both discourage large coefficients that tend to cancel each other. But they differ in specifics:
\begin{itemize}
\item 
While $L_2$ penalizes coefficients for being large in squared magnitude, contribution covariance only penalizes them if they tend to cancel each other out, i.e., their contribution is anticorrelated. So, for instance, there is no cancellation if the model has just one predictor, and hence the penalty is zero. Similarly, if the features are entirely uncorrelated with one another, i.e., $\underline{X}^T \underline{X} = n \lambda V$, the penalty is zero for all $\lambda$. In these cases, the solution penalized with contribution covariance is the same as the OLS solution, whereas the $L_2$ penalized solution is different.

\item In a model with two almost perfectly-correlated features, both contribution covariance and $L_2$ penalties are minimized when the model assigns equal attribution to the features. Nevertheless, $L2$ further shrinks the contributions while contribution covariance will not.

\item
Unlike L2, Modified L2 has the undesirable property that features whose means are farther from zero are penalized more. If feature means is zero, Modified L2 is equivalent to L2. With zero means, the only difference between Inverted Input Dropout and Ablated Augmentation is the rescaling of features in the former. This rescaling of features precisely cancels the decrease in off-diagonal terms of $\underline{X}^T\underline{X}$ that we get in Ablated Augmentation.
\end{itemize}
\end{remark}

\section{Extension to Neural Networks} 

For neural networks, unsurprisingly, it is hard to identify a closed-form for the model parameters, as we do for linear models in the proof of Theorem\ref{thm:main1}. Instead, we provide empirical results that mirror the result in the Theorem, i.e., we show that as we increase the strength of ablated data augmentation, the Contribution Covariance Penalty and the Modified L2 penalty falls across a variety of datasets and model depths (number of hidden layers). First, we identify the right notion of contribution for neural networks.

\subsection{From Contribution to Attribution}
\label{sec:attr}
Recall the definition of contribution from Section~\ref{sec:penalty}. The contribution of a feature $j$ to the prediction of an input $i$ is $\beta_j x_{ij}$; the sum of the contributions across the features, plus the bias term is the prediction.

To study deep networks, we need an analogous definition of contribution of a feature that takes the network's non-linearity into account. We can leverage the literature on feature-importance/attribution~\cite{STY17, Lundberg2017AUA, BSHKHM10,SVZ13,SGSK16,BMBMS16,SDBR14}. An attribution technique distributes the prediction score of a model for a specific input to its input features; the attribution to a input feature can be interpreted as its contribution to the prediction. We leverage a game theoretic approaches called Integrated Gradients~\cite{STY17}. The basic idea of Integrated Gradients is to accumulate the gradient along interpolation path between an information-less input called a baseline and the actual input. 

As defined, the attribution scores for the features sum to the prediction for a deep network just like contributions sum to predictions for a linear model. In fact, if you apply the two methods to a linear model, the attributions that are produced coincide with contributions. In what follows, we replace contributions with attributions in the Contribution Covariance Penalty.

\subsection{From Coefficients to Average Gradients}
Following the definition of contribution, the coefficient of a linear model for feature $j$ is the contribution divided by it's feature value $x_j$. We again replace contribution with attribution when looking for an analogous of coefficients. For Integrated Gradients, dividing attribution with the feature value (subtracted by baseline) is the average gradients along interpolated path. Again, if you apply IG with a zero baseline and divide the attribution by the feature value to a linear model, the resulted values are the coefficients. 
In what follows, we replace coefficients with attributions divided by feature value (subtracted by baseline) in the Modified L2 penalty.

\section{Experiments on Structured Data}

We study several classification and regression tasks and show that the Contribution Covariance Penalty and the Modified L2 Penalty drops with increased data augmentation. 

\subsection{Experimental Setup}
Our experiments are evaluated on a feed forward neural network with ReLU as activation function at each layer, with hidden layers of size 100. The number of hidden layers range from 0 (linear model) to 10. The models are trained with Adam and uses $1e^{-3}$ as initial learning rate for 200 epochs with a batch size of 256. Early stopping is applied if the loss on the validation set shows no improvement after 3 epochs and the best checkpoint is used for evaluation. For regression, the model is trained with mean squared error; for classification, the model is trained with cross entropy. For regression with no hidden layers, the closed form solution is used.

Each dataset is split with a 8:2 ratio for train and test. All evaluations such as MSE, accuracy, and penalty metric are evaluated on the test set. A quarter of training set is further used as validation except for linear regression when solved in closed form. Categorical features are first represented with one-hot encoding and all features are standardized to zero mean with unit variance. The response value is standardized to zero mean for regression. For classification, the value of the logit for the class is used as the response variable when calculating the Contribution Covariance Penalty. For Integrated Gradients, we select the all zero vector (i.e., each feature is set at its mean) as the baseline which represents a uninformative data point (refer to Section~\ref{sec:IG} in appendix for baselines). For IG, we pack 100 steps to approximate the Riemann sum. Notice that Modified L2 Penalty now coincides with L2 since we work with standardized features.

We run each experiment 10 times with different random seeds, which affects the parameter initialization for training, the ablation mask for ablated data augmentation, and the train-validation split.

\subsection{Datasets}

\textbf{FICO Score:}
The dataset is called Home Equity Line of Credit (HELOC) Dataset~\cite{FICO}. We predict \textit{ExternalRiskEstimate}, an estimate of loan default risk as judged by an analyst. There are 9861 data-points and 24 features.

\begin{figure*}[!t]
\centering
\includegraphics[width= 1.0\linewidth]{./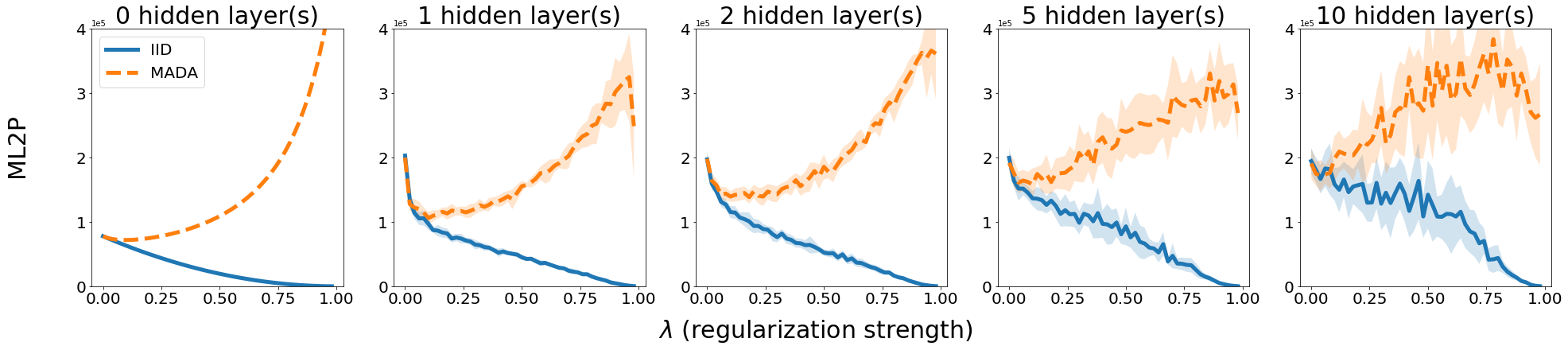}
\includegraphics[width= 1.0\linewidth]{./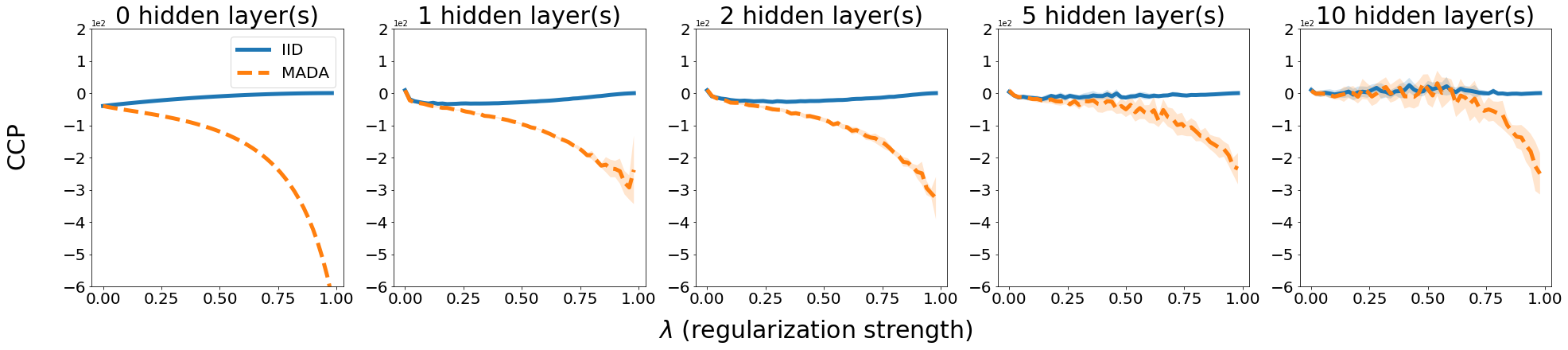}
\caption{\textbf{Fico Score: } Trend of ML2P calculated with AvgGradients and CCP calculated with IG when increasing $\lambda$ for MADA (Mean Ablated Data Augmentation) and IID (Inverted Input Dropout).}
\label{fig:fico}
\end{figure*}

\textbf{Census Income:}
The Adult Census Income dataset is collected from 1994 Census Database~\cite{Income}. It is a classification task; the response variable is the boolean condition 'does income exceeds \$50K'. The data set has 48842 data-points and 14 features.

\begin{figure*}[!t]
\centering
\includegraphics[width= 1.0\linewidth]{./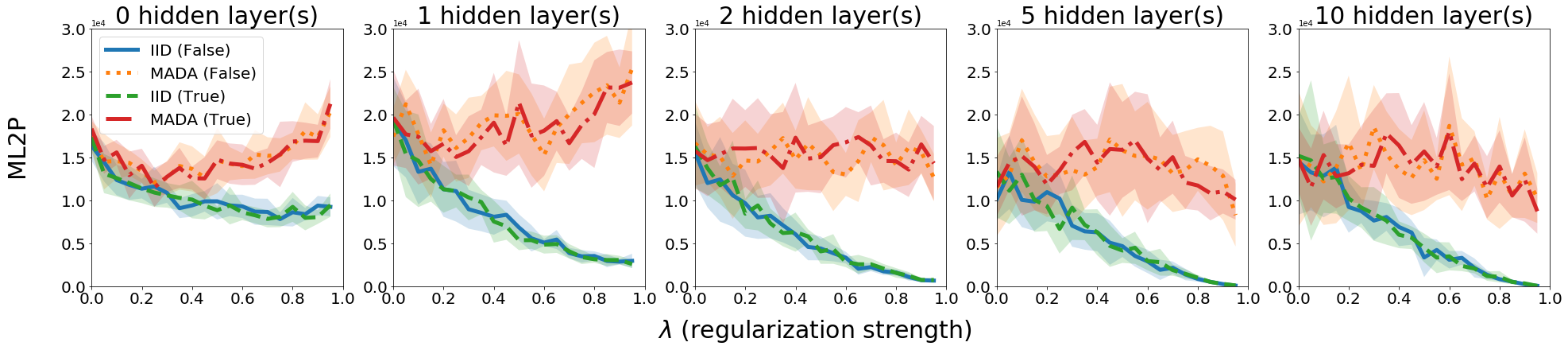}
\includegraphics[width= 1.0\linewidth]{./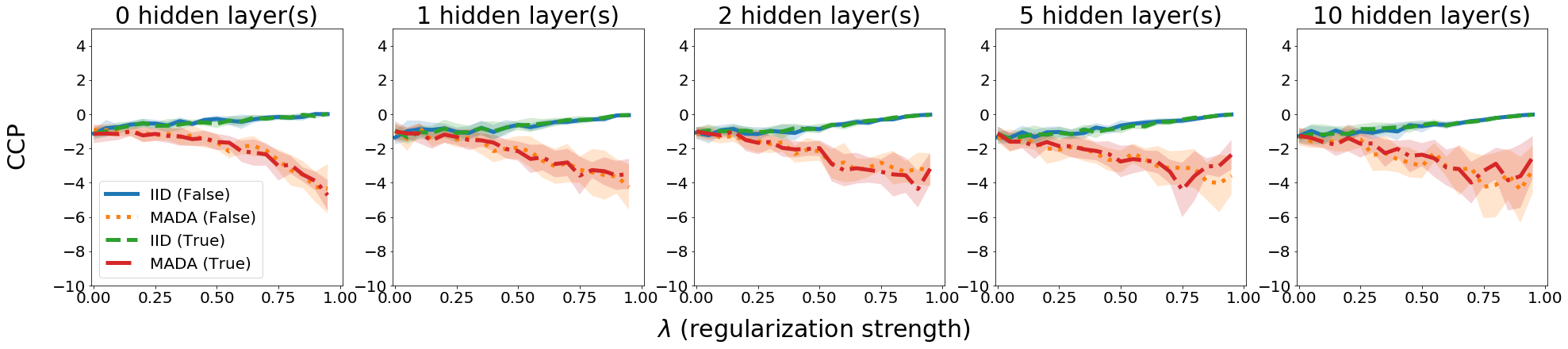}
\caption{\textbf{Census Income: }Trend of ML2P calculated with AvgGradients and CCP calculated with IG when increasing $\lambda$ for MADA (Mean Ablated Data Augmentation) and IID (Inverted Input Dropout) for both classes.}
\label{fig:census}
\end{figure*}

\textbf{Telco Customer Churn:}
This dataset contains customer data from a telecom company to predict if customer churns~\cite{Churn}. It has 7043 rows (customers) and 21 features; features include demographic, account, and subscription information.

\begin{figure*}[!t]
\centering
\includegraphics[width= 1.0\linewidth]{./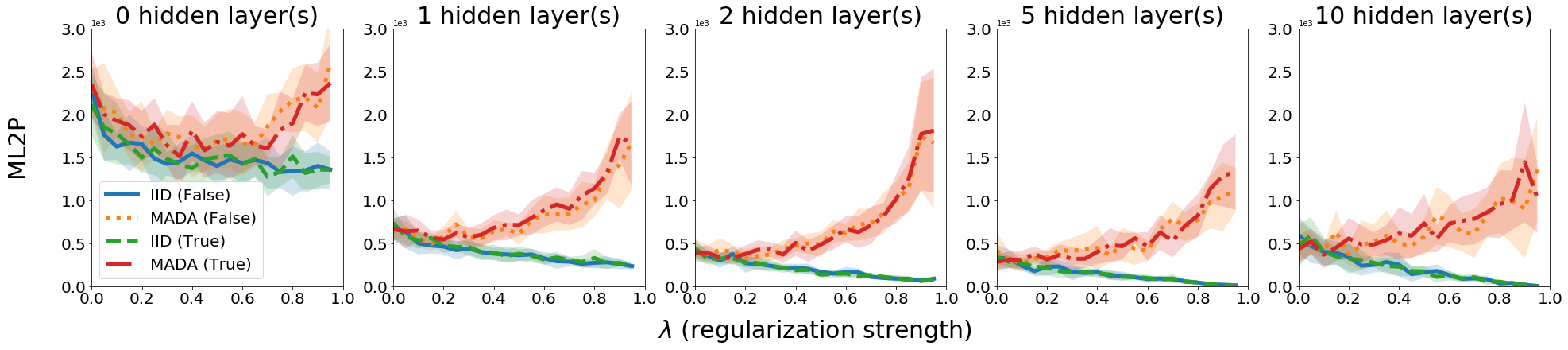}
\includegraphics[width= 1.0\linewidth]{./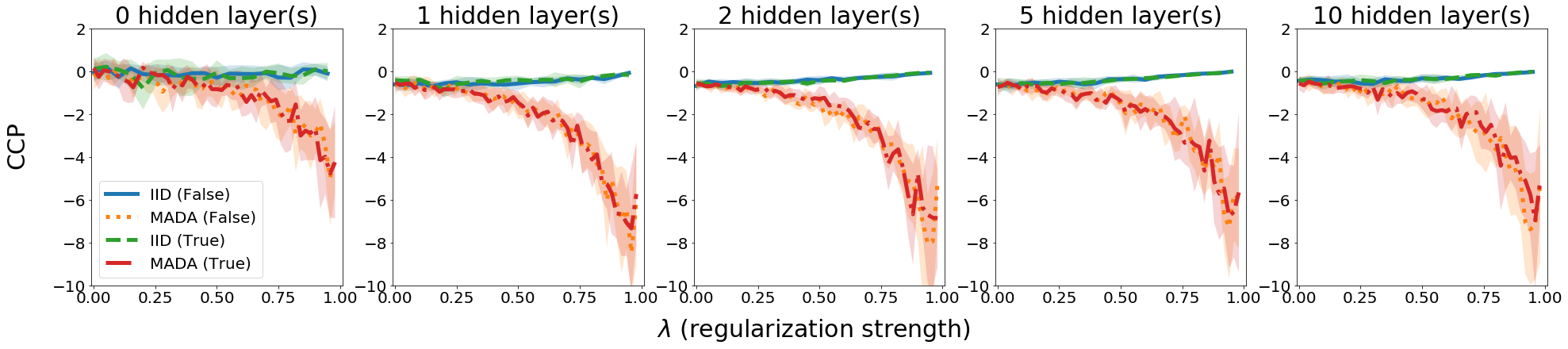}
\caption{\textbf{Telco Customer Churn:} Trend of ML2P calculated with AvgGradients and CCP calculated with IG when increasing $\lambda$ for MADA (Mean Ablated Data Augmentation) and IID (Inverted Input Dropout) for both classes.}
\label{fig:telco}
\end{figure*}

\subsection{Takeaways}
\label{sec:learnings}

We report how Modified L2 Penalty (ML2P) and Contribution Covariance Penalty (CCP) change as we increase regularization strength $\lambda$ for each dataset for different number of hidden layers in Figure \ref{fig:fico}, \ref{fig:census}, \ref{fig:telco} for both mean ablated data augmentation (MADA) and inverted input dropout (IID). For the three tasks, model performance has a downward trend as we increase $\lambda$ with some insignificant improvement in early $\lambda$ on the Census dataset. Like other regularization techniques, $\lambda$ is best searched with validation for model performance and we will mainly focus on the penalty trend. Figures of model performance trend can be found in appendix. Here are our takeaways:

ML2P decreases as we increases the regularization strength of IID; CCP decreases as we increases the regularization strength of MADA across all experiments regardless of being trained with closed form solution or applying augmentation during sgd training. Deep networks following the same trend shows that attribution methods, IG in particular, is a valid medium to extend findings from linear models as we would expect the models to behave somehow similar as they have similar performance and our theory applies.  

We also observed that ML2P increases as we increase the regularization strength for MADA. This aligns with Remark~\ref{re:ccp}. As $\lambda \rightarrow 1$, MADA becomes a single variable regression which results in larger ML2P. When $\lambda$ increases with IID, CCP is pushed towards 0. This is because when the coefficients are pushed towards 0, both variance of attribution and variance of prediction score goes to 0. As stated in Remark~\ref{re:compare}, the two augmentation techniques both discourage large coefficients but they differ in several aspects.  
    
\section{Conclusion}
Given the richness of theory on linear regression, and its relative paucity in ML, it would be valuable to find a way to apply the former to the latter. The typical bridge from linear theory to ML is the Taylor series expansion, in which a non-linear ML model is approximated as linear (or quadratic) in the neighborhood of a data point of interest. An alternate bridge is to use \emph{attributions} (see Section~\ref{sec:attr}); attributions compute an input specific linearly decomposition for deep networks. We use this bridge to show that properties of two data augmentation techniques that we can theoretically establish for linear models, persist empirically for neural networks, thereby improving our comprehension of neural networks.

\newpage
\section{Broader Impact}

This paper does not propose a new technique for data augmentation. It simply seeks to understand the effect of a specific form of data augmentation more deeply. In this sense, it may help ML developers reason about the effectiveness of data augmentation for various tasks. However, \emph{this work does not present any foreseeable societal consequence}, because any harm from bad data augmentation would have been discovered during the inspection of test accuracy.

\bibliographystyle{unsrtnat}
\bibliography{references}

\begin{thebibliography}{26}
\providecommand{\natexlab}[1]{#1}
\providecommand{\url}[1]{\texttt{#1}}
\expandafter\ifx\csname urlstyle\endcsname\relax
  \providecommand{\doi}[1]{doi: #1}\else
  \providecommand{\doi}{doi: \begingroup \urlstyle{rm}\Url}\fi

\bibitem[Srivastava et~al.(2014)Srivastava, Hinton, Krizhevsky, Sutskever, and
  Salakhutdinov]{srivastava2014dropout}
Nitish Srivastava, Geoffrey Hinton, Alex Krizhevsky, Ilya Sutskever, and Ruslan
  Salakhutdinov.
\newblock Dropout: a simple way to prevent neural networks from overfitting.
\newblock \emph{The journal of machine learning research}, 15\penalty0
  (1):\penalty0 1929--1958, 2014.

\bibitem[DeVries and Taylor(2017)]{devries2017improved}
Terrance DeVries and Graham~W Taylor.
\newblock Improved regularization of convolutional neural networks with cutout.
\newblock \emph{arXiv preprint arXiv:1708.04552}, 2017.

\bibitem[Zhong et~al.(2017)Zhong, Zheng, Kang, Li, and Yang]{Zhong2017RandomED}
Zhun Zhong, Liang Zheng, Guoliang Kang, Shaozi Li, and Yi~Yang.
\newblock Random erasing data augmentation.
\newblock \emph{arXiv preprint arXiv:1708.04896}, 2017.

\bibitem[Abadi et~al.(2016)Abadi, Barham, Chen, Chen, Davis, Dean, Devin,
  Ghemawat, Irving, Isard, et~al.]{tensorflow}
Mart{\'\i}n Abadi, Paul Barham, Jianmin Chen, Zhifeng Chen, Andy Davis, Jeffrey
  Dean, Matthieu Devin, Sanjay Ghemawat, Geoffrey Irving, Michael Isard, et~al.
\newblock Tensorflow: A system for large-scale machine learning.
\newblock In \emph{12th $\{$USENIX$\}$ Symposium on Operating Systems Design
  and Implementation ($\{$OSDI$\}$ 16)}, pages 265--283, 2016.

\bibitem[Paszke et~al.(2019)Paszke, Gross, Massa, Lerer, Bradbury, Chanan,
  Killeen, Lin, Gimelshein, Antiga, et~al.]{pytorch}
Adam Paszke, Sam Gross, Francisco Massa, Adam Lerer, James Bradbury, Gregory
  Chanan, Trevor Killeen, Zeming Lin, Natalia Gimelshein, Luca Antiga, et~al.
\newblock Pytorch: An imperative style, high-performance deep learning library.
\newblock In \emph{Advances in Neural Information Processing Systems}, pages
  8024--8035, 2019.

\bibitem[Sundararajan et~al.(2017)Sundararajan, Taly, and Yan]{STY17}
Mukund Sundararajan, Ankur Taly, and Qiqi Yan.
\newblock Axiomatic attribution for deep networks.
\newblock In \emph{Proceedings of the 34th International Conference on Machine
  Learning-Volume 70}, pages 3319--3328. JMLR. org, 2017.

\bibitem[Bengio et~al.(2011)Bengio, Bastien, Bergeron, Boulanger-Lewandowski,
  Breuel, Chherawala, Cisse, C{\^o}t{\'e}, Erhan, Eustache,
  et~al.]{bengio2011deep}
Yoshua Bengio, Fr{\'e}d{\'e}ric Bastien, Arnaud Bergeron, Nicolas
  Boulanger-Lewandowski, Thomas Breuel, Youssouf Chherawala, Moustapha Cisse,
  Myriam C{\^o}t{\'e}, Dumitru Erhan, Jeremy Eustache, et~al.
\newblock Deep learners benefit more from out-of-distribution examples.
\newblock In \emph{Proceedings of the Fourteenth International Conference on
  Artificial Intelligence and Statistics}, pages 164--172, 2011.

\bibitem[Krizhevsky et~al.(2012)Krizhevsky, Sutskever, and
  Hinton]{krizhevsky2012imagenet}
Alex Krizhevsky, Ilya Sutskever, and Geoffrey~E Hinton.
\newblock Imagenet classification with deep convolutional neural networks.
\newblock In \emph{Advances in neural information processing systems}, pages
  1097--1105, 2012.

\bibitem[LeCun et~al.(1998)LeCun, Bottou, Bengio, and
  Haffner]{lecun1998gradient}
Yann LeCun, L{\'e}on Bottou, Yoshua Bengio, and Patrick Haffner.
\newblock Gradient-based learning applied to document recognition.
\newblock \emph{Proceedings of the IEEE}, 86\penalty0 (11):\penalty0
  2278--2324, 1998.

\bibitem[Wu et~al.(2015)Wu, Yan, Shan, Dang, and Sun]{wu2015deep}
Ren Wu, Shengen Yan, Yi~Shan, Qingqing Dang, and Gang Sun.
\newblock Deep image: Scaling up image recognition.
\newblock \emph{arXiv preprint arXiv:1501.02876}, 7\penalty0 (8), 2015.

\bibitem[Lemley et~al.(2017)Lemley, Bazrafkan, and Corcoran]{lemley2017smart}
Joseph Lemley, Shabab Bazrafkan, and Peter Corcoran.
\newblock Smart augmentation learning an optimal data augmentation strategy.
\newblock \emph{Ieee Access}, 5:\penalty0 5858--5869, 2017.

\bibitem[Cubuk et~al.(2019)Cubuk, Zoph, Mane, Vasudevan, and
  Le]{cubuk2019autoaugment}
Ekin~D Cubuk, Barret Zoph, Dandelion Mane, Vijay Vasudevan, and Quoc~V Le.
\newblock Autoaugment: Learning augmentation strategies from data.
\newblock In \emph{Proceedings of the IEEE conference on computer vision and
  pattern recognition}, pages 113--123, 2019.

\bibitem[Wang et~al.(2018)Wang, Pham, Dai, and Neubig]{wang2018switchout}
Xinyi Wang, Hieu Pham, Zihang Dai, and Graham Neubig.
\newblock Switchout: an efficient data augmentation algorithm for neural
  machine translation.
\newblock In \emph{Proceedings of the 2018 Conference on Empirical Methods in
  Natural Language Processing}, pages 856--861, 2018.

\bibitem[{Bishop}(1995)]{Bishop95}
C.~M. {Bishop}.
\newblock Training with noise is equivalent to tikhonov regularization.
\newblock \emph{Neural Computation}, 7\penalty0 (1):\penalty0 108--116, 1995.

\bibitem[tf_(2020)]{tf_dropout}
tf.keras.layers.dropout (2.2.0), 2020.
\newblock URL
  \url{https://www.tensorflow.org/api_docs/python/tf/keras/layers/Dropout}.

\bibitem[pyt(2020)]{pytorch_dropout}
torch.nn.dropout (1.5.0), 2020.
\newblock URL \url{https://pytorch.org/docs/stable/nn.html#dropout}.

\bibitem[Maleki et~al.(2013)Maleki, Tran-Thanh, Hines, Rahwan, and
  Rogers]{sampledshapley}
Sasan Maleki, Long Tran-Thanh, Greg Hines, Talal Rahwan, and Alex Rogers.
\newblock Bounding the estimation error of sampling-based shapley value
  approximation.
\newblock \emph{arXiv preprint arXiv:1306.4265}, 2013.

\bibitem[Lundberg and Lee(2017)]{Lundberg2017AUA}
Scott~M Lundberg and Su-In Lee.
\newblock A unified approach to interpreting model predictions.
\newblock In \emph{Advances in neural information processing systems}, pages
  4765--4774, 2017.

\bibitem[Baehrens et~al.(2010)Baehrens, Schroeter, Harmeling, Kawanabe, Hansen,
  and M{\"u}ller]{BSHKHM10}
David Baehrens, Timon Schroeter, Stefan Harmeling, Motoaki Kawanabe, Katja
  Hansen, and Klaus-Robert M{\"u}ller.
\newblock How to explain individual classification decisions.
\newblock \emph{Journal of Machine Learning Research}, pages 1803--1831, 2010.

\bibitem[Simonyan et~al.(2013)Simonyan, Vedaldi, and Zisserman]{SVZ13}
Karen Simonyan, Andrea Vedaldi, and Andrew Zisserman.
\newblock Deep inside convolutional networks: Visualising image classification
  models and saliency maps.
\newblock \emph{arXiv preprint arXiv:1312.6034}, 2013.

\bibitem[Shrikumar et~al.(2017)Shrikumar, Greenside, and Kundaje]{SGSK16}
Avanti Shrikumar, Peyton Greenside, and Anshul Kundaje.
\newblock Learning important features through propagating activation
  differences.
\newblock In \emph{Proceedings of the 34th International Conference on Machine
  Learning-Volume 70}, pages 3145--3153. JMLR. org, 2017.

\bibitem[Binder et~al.(2016)Binder, Montavon, Lapuschkin, M{\"u}ller, and
  Samek]{BMBMS16}
Alexander Binder, Gr{\'e}goire Montavon, Sebastian Lapuschkin, Klaus-Robert
  M{\"u}ller, and Wojciech Samek.
\newblock Layer-wise relevance propagation for neural networks with local
  renormalization layers.
\newblock In \emph{International Conference on Artificial Neural Networks},
  pages 63--71. Springer, 2016.

\bibitem[Springenberg et~al.(2014)Springenberg, Dosovitskiy, Brox, and
  Riedmiller]{SDBR14}
Jost~Tobias Springenberg, Alexey Dosovitskiy, Thomas Brox, and Martin
  Riedmiller.
\newblock Striving for simplicity: The all convolutional net.
\newblock \emph{arXiv preprint arXiv:1412.6806}, 2014.

\bibitem[Chen et~al.(2018)Chen, Lin, Rudin, Shaposhnik, Wang, and Wang]{FICO}
Chaofan Chen, Kangcheng Lin, Cynthia Rudin, Yaron Shaposhnik, Sijia Wang, and
  Tong Wang.
\newblock An interpretable model with globally consistent explanations for
  credit risk.
\newblock \emph{arXiv preprint arXiv:1811.12615}, 2018.

\bibitem[Kohavi(1996)]{Income}
Ron Kohavi.
\newblock Scaling up the accuracy of naive-bayes classifiers: A decision-tree
  hybrid.
\newblock In \emph{Proceedings of the Second International Conference on
  Knowledge Discovery and Data Mining}, KDD’96, page 202–207. AAAI Press,
  1996.

\bibitem[Chu(2018)]{Churn}
Telco customer churn dataset, 2018.
\newblock URL \url{https://www.kaggle.com/blastchar/telco-customer-churn}.

\end{thebibliography}

\newpage
\appendix

\section{OLS regression with mean subtraction}
\label{sec:OLS-mean-subtraction}
Consider OLS regression on a data set $d = (X, y)$ where $y$ is the $n \times 1$ response and $X$ is an $n \times k$ matrix of numerical features. Categorical features are assumed to be coded as dummy variables, i.e., numerical variables that take on one of two values. Linear regression predicts $y$ as a linear combination of the columns of $X$, i.e. $\hat{y} =  \beta_0 + X\beta$. Prediction error is defined as $\epsilon = y - \hat{y}$, whose total square is minimized by least squares regression. Let bar denote the row vector of column-wise sample averages (or scalar for a column vector). Then
\begin{align*}
\label{eq:linear-regression}
\hat{\beta}(d) &= \arg \min_{\beta, \beta_0} |\epsilon|^2 \\
&= \arg \min_{\beta, \beta_0} |\epsilon - \bar{\epsilon}|^2 + n\bar{\epsilon}^2
\\
&= \arg \min_{\beta, \beta_0} |(y - \beta_0 - X\beta) - (\bar{y} - \beta_0 - \bar{X}\beta)|^2 + n(\bar{y} - \beta_0 - \bar{X}\beta)^2
\\
&= \arg \min_{\beta, \beta_0} |(y -\bar{y}) - (X-\bar{X})\beta)|^2 + n(\bar{y} - \beta_0 - \bar{X}\beta)^2
\\
&= \arg \min_{\beta}  |(y-\bar{y}) - (X - \bar{X})\beta|^2
   \tag{where $\beta_0 = \bar{y} - \bar{X}\beta$}\\
&= \arg \min_{\beta}  |\underline{y} - \underline{X}\beta|^2 \\
&= (\underline{X}^T \underline{X})^{-1}\underline{X}^T \underline{y}
\end{align*}
Here, underbar represent the subtraction of column-wise means, i.e., $\underline{z} = z - \bar{z}$.

\section{Ablation asymptotics}
\label{eq:ablation-asymptotics}
Now consider $\underline{\tilde{x}}_{\cdot j}$ and $\underline{\tilde{x}}_{\cdot k}$, the $j$-th and $k$-th column of the (mean-removed) feature matrix $\underline{\tilde{X}}$ and and $\underline{\tilde{y}}$ the response of $D_N^\lambda$.
\begin{align}
\frac{1}{N} \underline{\tilde{x}}_{\cdot j}^T \underline{\tilde{y}}
&= \frac{1}{N} \sum_{i=1}^N \underline{\tilde{x}}_{ij}\underline{\tilde{y_i}}
\nonumber
\\
\label{eq:pre-asymptotic}
&= \frac{1}{N} \sum_{i=1}^N \tilde{x}_{ij}\tilde{y_i}
- \left(\frac{1}{N} \sum_{i=1}^N \tilde{x}_{ij}\right)
\left(\frac{1}{N} \sum_{i=1}^N \tilde{y}_i\right)
\end{align}
Now consider the random variables $\tilde{X}_j$, $\tilde{X}_k$ and $j \neq k$ corresponding to the $j$-th and $k$-th features from a random data point of $D_N^\lambda$. These random variables are drawn from the original $d$ but ablated with independent probability $\lambda$. Let $Y$ denote the random variable of the response simultaneously drawn from $d$ but without ablation. Due to mean substitution, the expectation of each feature value in $D_N^\lambda$ is the same as in $d$. In other words, $E\tilde{X}_j=\bar{x}_j$. Likewise $E\tilde{X}_k=\bar{x}_k$ and $E Y=\bar{y}$. Applying the Strong Law of Large Numbers to the equation \ref{eq:pre-asymptotic}:

\begin{align*}
\lim_{N \to \infty}
\frac{1}{N} \underline{\tilde{x}}_{\cdot j}^T \underline{\tilde{y}}
&\overset{a.s.}{\to}
E\tilde{X}_j Y - E\tilde{X}_j EY
\tag{from \ref{eq:pre-asymptotic}}
\\
&= E(\tilde{X}_j - E\tilde{X}_j) (Y - EY)
\\
&= E(\tilde{X}_j - \bar{x}_j) (Y - \bar{y})
\\
&= \lambda E(\bar{x}_j - \bar{x}_j) (Y - \bar{y}) +
(1-\lambda) E(X_j - \bar{x}_j) (Y - \bar{y})
\tag{by conditioning the expectation on ablation}
\\
&= (1-\lambda) E(X_j - \bar{x}_j) (Y - \bar{y})\\
&= (1-\lambda) \frac{1}{n}\sum_{i=1}^n (x_{ij} - \bar{x}_j)(y_i - \bar{y})
\tag{direct evaluation on $d$}
\\
&=(1-\lambda)\frac{1}{n} \underline{x}_{\cdot j}^T \underline{y}
\end{align*}
Likewise
\begin{align*}
\lim_{N \to \infty}
\frac{1}{N} \underline{\tilde{x}}_{\cdot j}^T \underline{\tilde{x}}_{\cdot k}
&\overset{a.s.}{\to}
E\tilde{X}_j \tilde{X}_k - E\tilde{X}_j E\tilde{X}_k
\\
&= E(\tilde{X}_j - E\tilde{X}_j) E(\tilde{X}_k - E\tilde{X}_k)
\\
&= E(\tilde{X}_j - \bar{x}_j) (\tilde{X}_k - \bar{x}_k)
\\
&= (1-\lambda)^2 E(X_j - \bar{x}_j) (X_k - \bar{x}_k)
\\
&=(1-\lambda)^2\frac{1}{n} \underline{x}_{\cdot j}^T \underline{x}_{\cdot k}
\\
\lim_{N \to \infty}
\frac{1}{N} \underline{\tilde{x}}_{\cdot j}^T \underline{\tilde{x}}_{\cdot j}
&\overset{a.s.}{\to}
\mathrm{var\ }\tilde{X}_j
\\
&= E(\tilde{X}_j - E\tilde{X}_j)^2
\\
&= E(\tilde{X}_j - \bar{x}_j)^2
\\
&= (1-\lambda) E(X_j - \bar{x}_j)^2
\\
&=(1-\lambda)\frac{1}{n} v_j
\tag{population variance of $j$-th feature in $d$}
\end{align*}
Combining these results we have:
\begin{align*}
\lim_{N\to \infty} \frac{1}{N} \underline{\tilde{X}}^T \underline{\tilde{y}}
    &\overset{a.s.}{\to} \frac{1}{n} (1-\lambda) \underline{X}^T \underline{y} \\
\lim_{N\to \infty} \frac{1}{N} \underline{\tilde{X}}^T \underline{\tilde{X}} 
    &\overset{a.s.}{\to} \frac{1}{n} (1-\lambda)^2 \underline{X}^T \underline{X} + \lambda(1-\lambda) V
\end{align*}
where $V$ is the $k\times k$ diagonal matrix of feature variances. 

\section{Expectations for Inverted Input Dropout}
\label{sec:IID-expectations}
Let $EX_j=\mu_j$. By conditioning on ablation, we observe that $E\tilde{X}_j=\mu_j$. Now
\begin{align*}
E\underline{\tilde{X}}^2 &=
E\tilde{X}_j^2 - (E\tilde{X}_j)^2 \\
&= (1-\lambda)\frac{EX_j^2}{(1-\lambda)^2} - \mu_j^2\\
&= \frac{v_j + \mu_j^2}{1-\lambda} - \mu_j^2
\tag{where $v_j$ is variance of $j$-th feature}
\\
&= v_j + \frac{\lambda}{1-\lambda}(v_j + \mu_j^2)
\\
E\underline{\tilde{X}}_j \underline{\tilde{X}}_k
&= E\tilde{X}_j \tilde{X}_k - E\tilde{X}_j E\tilde{X}_k \\
&= (1-\lambda)^2 \frac{EX_jX_k}{(1-\lambda)^2} - EX_j EX_k\\
&= EX_j X_k - EX_j EX_k\\
&=E\underline{X}_j \underline{X}_k
\\
E\underline{\tilde{X}}_j \underline{Y}
&= E\tilde{X_j}Y - E\tilde{X_j}EY \\
&= (1-\lambda) \frac{EX_jY}{1-\lambda} - EX_jEY\\
&= EX_jY - EX_jEY\\
&=E\underline{X}_j \underline{Y}
\end{align*}

\section{Integrated Gradients}
\label{sec:IG}
Formally, suppose we have a
function $F: R^{n} \rightarrow [0,1]$ that represents a deep network.
Specifically, let $x \in R^{n}$ be the input at hand, and
$x' in R^{n}$ be a baseline input, meant to represent an informationless input. For vision networks, this could be the black image, or an image consisting of noise. For structured data, this is could be a vector of feature means. We consider the straight-line path (in $R^n$) from the baseline $x'$ to the input
$x$, and compute the gradients at all points along the path. Integrated gradients (IG) attributions are obtained by accumulating these gradients. The integrated gradient score for a feature $i$ with an input $x$ and baseline
$x'$ is:
\begin{equation}\label{eq:ig}
attrib_i = (x_i-x_i') \cdot \int_{\alpha=0}^{1} \tfrac{\partial F(x' + \alpha *( x- x'))}{\partial x_i}~d\alpha
\end{equation}

\section{Model Performance Trend}
\begin{figure*}[h]
\centering
\includegraphics[width= 1.0\linewidth]{./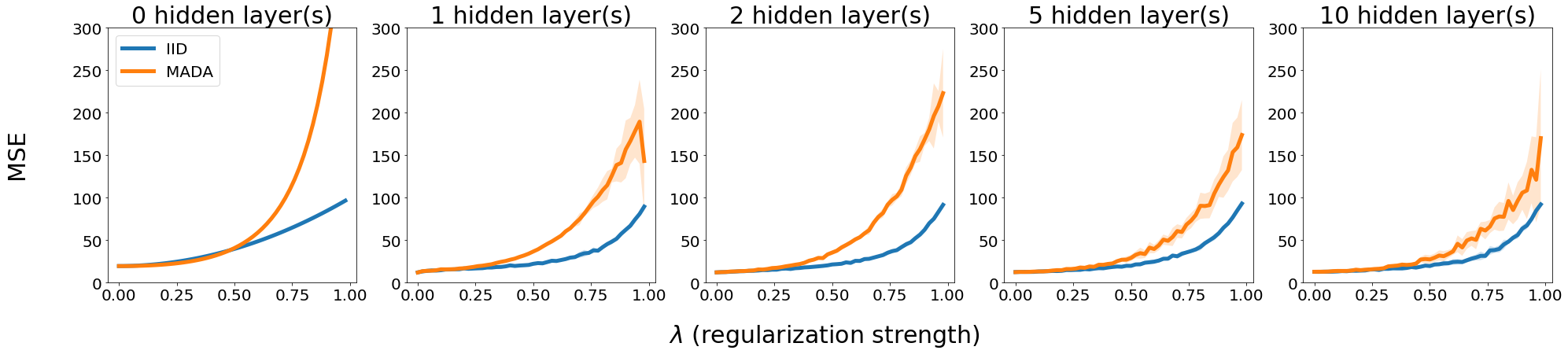}
\caption{MSE trend when increasing $\lambda$ on Fico.}
\label{fig:fico_mse}
\end{figure*}
\begin{figure*}[h]
\centering
\includegraphics[width= 1.0\linewidth]{./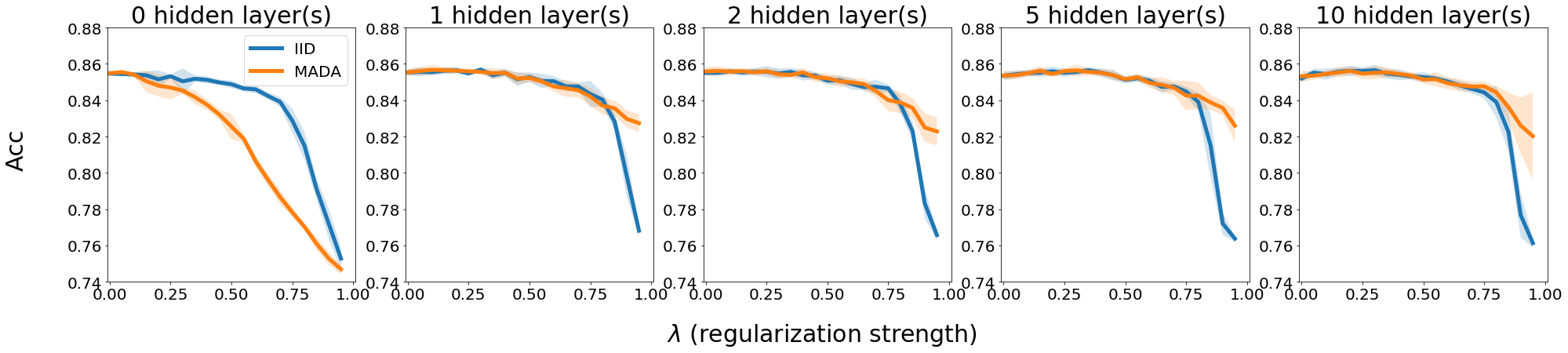}
\caption{Acc trend when increasing $\lambda$ on Census.}
\label{fig:adult_acc}
\end{figure*}
\begin{figure*}[h]
\centering
\includegraphics[width= 1.0\linewidth]{./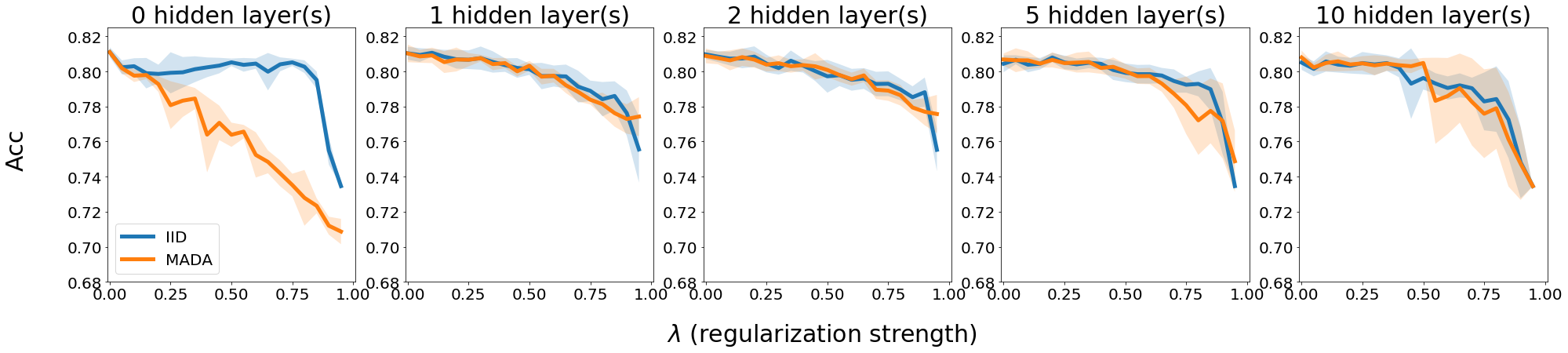}
\caption{Acc trend when increasing $\lambda$ on Telco.}
\label{fig:telco_acc}
\end{figure*}

\end{document}